\newtheorem{lem}{Lemma}
\newtheorem{mydef}{Definition}
\newtheorem{thm}{Theorem}
\newtheorem{pro}{Proposition}
\title{\LARGE \bf Constant-time Motion Planning with Anytime Refinement for Manipulation}
\author{Itamar Mishani\authorrefmark{1}, Hayden Feddock\authorrefmark{2}, Maxim Likhachev\authorrefmark{1}
\thanks{I. Mishani and M. Likhachev are with the Robotics Institute, School of Computer Science, Carnegie Mellon University, Pittsburgh, PA. e-mail: {\tt \{imishani, mlikhach\}@.cs.cmu.edu}.}
\thanks{H. Feddock is with Electrical and Computer Engineering Department, Pittsburgh University. e-mail: {\tt \{htf1\}@.pitt.edu}.}
\thanks{Research was sponsored by the ARM (Advanced Robotics for Manufacturing) Institute through a grant from the Office of the Secretary of Defense and was accomplished under Agreement Number W911NF-17-3-0004. The views and conclusions contained in this document are those of the authors and should not be interpreted as representing the official policies, either expressed or implied, of the Office of the Secretary of Defense or the U.S. Government. The U.S. Government is authorized to reproduce and distribute reprints for Government purposes notwithstanding any copyright notation herein.}
}
\begin{document}
\setlength{\textfloatsep}{1.5pt}

\maketitle

\begin{abstract}

    Robotic manipulators are essential for future autonomous systems, yet limited trust in their autonomy has confined them to rigid, task-specific systems. The intricate configuration space of manipulators, coupled with the challenges of obstacle avoidance and constraint satisfaction, often makes motion planning the bottleneck for achieving reliable and adaptable autonomy. Recently, a class of constant-time motion planners (CTMP) was introduced. These planners employ a preprocessing phase to compute data structures that enable online planning provably guarantee the ability to generate motion plans, potentially sub-optimal, within a user defined time bound. This framework has been demonstrated to be effective in a number of time-critical tasks. However, robotic systems often have more time allotted for planning than the online portion of CTMP requires, time that can be used to improve the solution. To this end, we propose an anytime refinement approach that works in combination with CTMP algorithms. Our proposed framework, as it operates as a constant time algorithm, rapidly generates an initial solution within a user-defined time threshold. Furthermore, functioning as an anytime algorithm, it iteratively refines the solution's quality within the allocated time budget. This enables our approach to strike a balance between guaranteed fast plan generation and the pursuit of optimization over time. We support our approach by elucidating its analytical properties, showing the convergence of the anytime component towards optimal solutions. Additionally, we provide empirical validation through simulation and real-world demonstrations on a 6 degree-of-freedom robot manipulator, applied to an assembly domain.

\end{abstract}


\section{Introduction}

Manipulation is prevalent across a wide range of applications. It spans from residential to warehouse and factory environments, where tasks often involve a degree of environmental stability and repetition. Recently, a new concept of Constant-Time Motion Planning (CTMP) was introduced, with the aim of guaranteeing the ability to generate a motion plan within a (short) constant time in such environments \cite{ctmp, ConveyerCTMP, APP}. CTMP has proven to be highly valuable in time-critical applications, such as manipulators catching incoming projectiles, operating on conveyor belts, and mail sorting in a mailroom. Although these algorithms frequently yield solutions within mere milliseconds, in numerous domains the allotted time bound for planning extends beyond these limits, thus remaining underutilized.

Our inspiration is drawn from manufacturing settings that rely on robotic manipulators for assembly tasks. Common approaches in such settings tend to confine the system to a task-specific configurations, employing a predefined set of motion plans, often designed by humans, thereby constraining autonomy. For example, consider the assembly cell illustrated in Fig. \ref{fig:cell}, where three robotic arms continuously pick and place objects in a \textit{semi-static} environment\cite{APP}. In such an environment, while the general settings remain constant, the manipulated objects can move within their designated regions. This context demands real-time, reliable, and efficient motion planning to instill confidence in autonomous operations. Moreover, as the assembly task involves a sequence of interdependent operations, each building upon previous one, the need for rapid and high quality plan queries becomes crucial.




\begin{figure}
\centering
\includegraphics[width=0.49\linewidth]{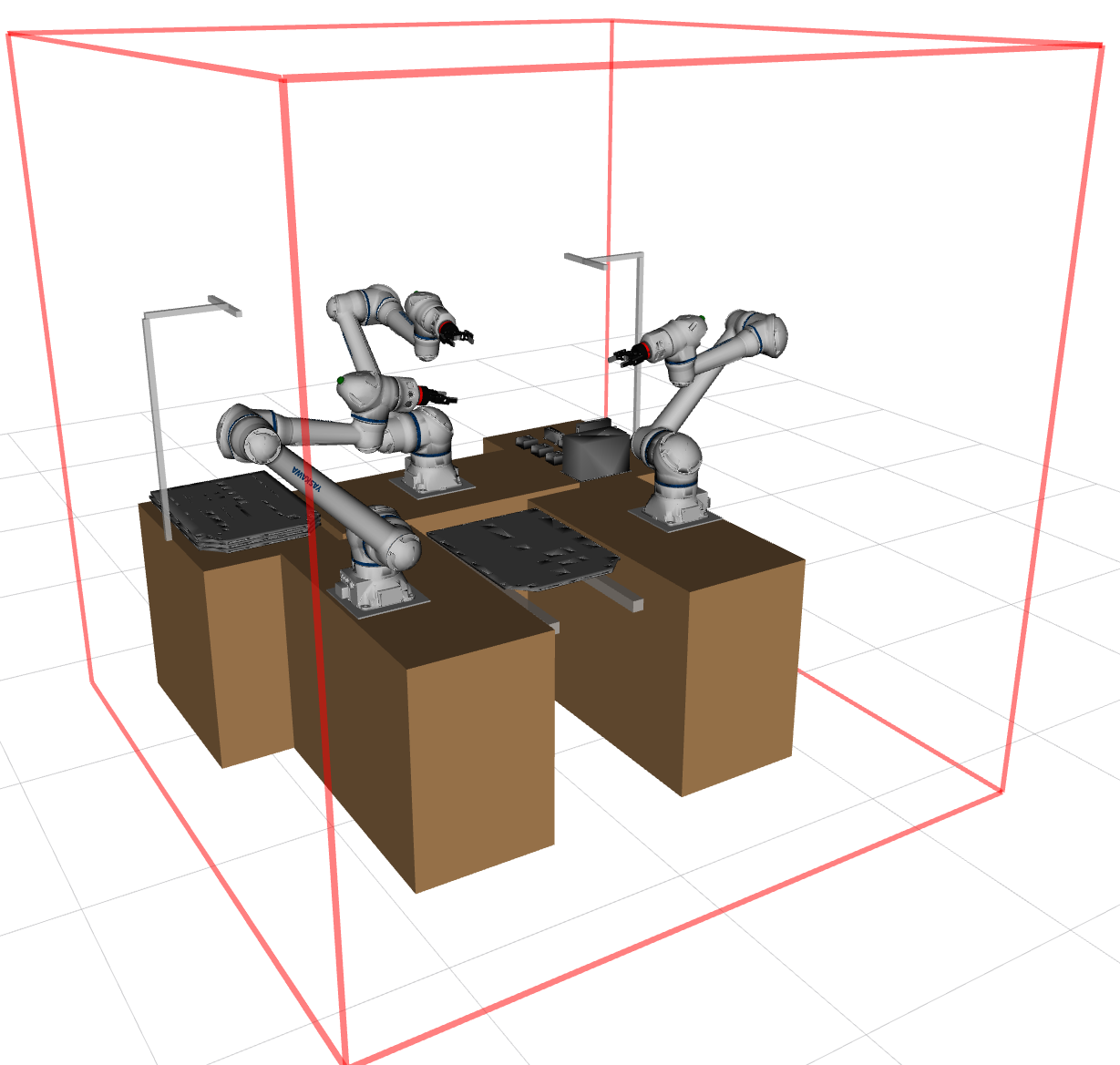}
\hfill
\includegraphics[width=0.49\linewidth]{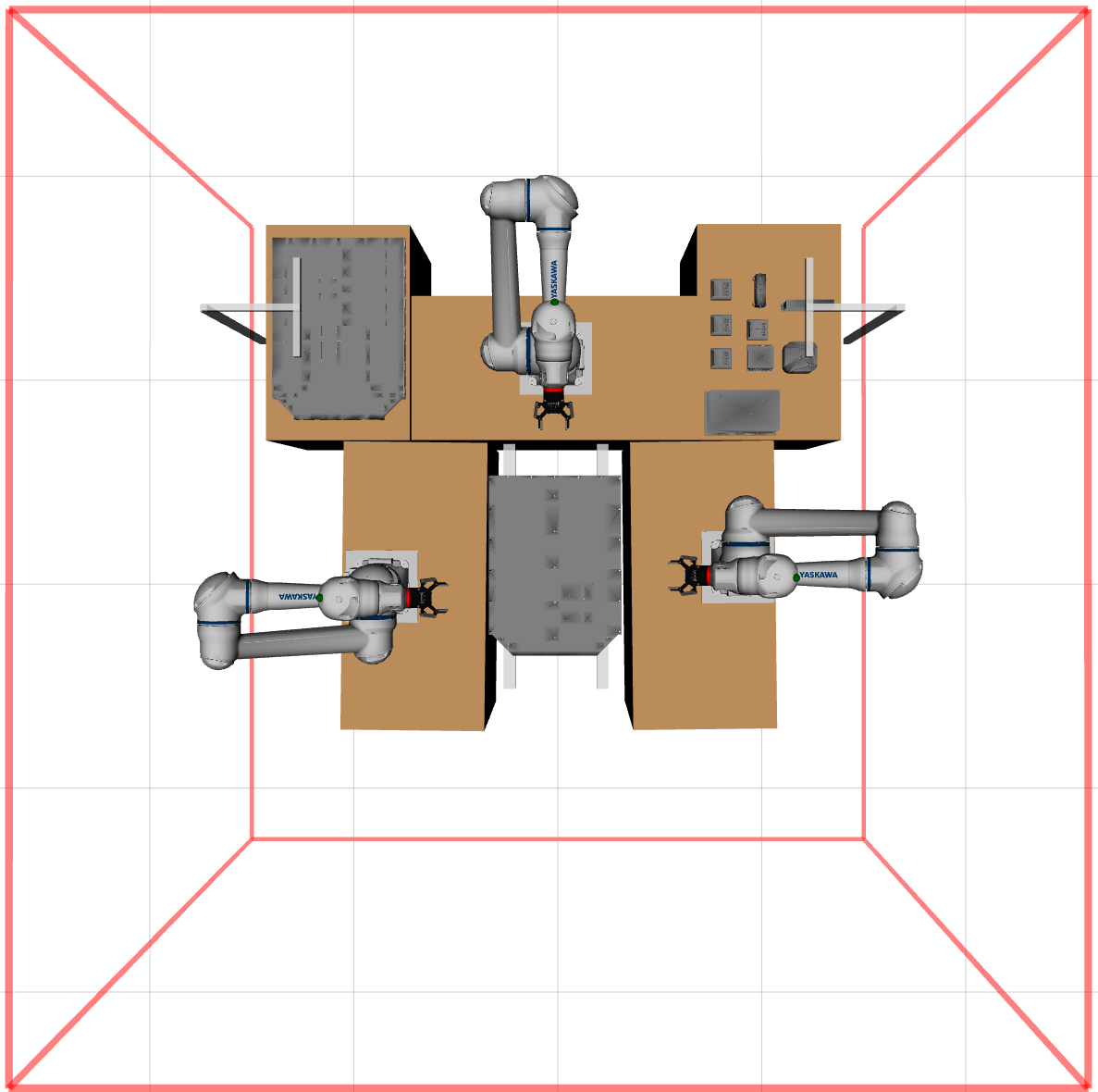}
\caption{An assembly cell, composed of three Yaskawa HC10DTP manipulators.}
\label{fig:cell}
\end{figure}

CTMP approaches employ offline preprocessing to generate auxiliary data structures, which are subsequently utilized in an online query phase to compute paths in constant time. However, they often yield suboptimal solutions. While guaranteeing path generation within milliseconds, the available time budget is frequently longer, offering opportunities for solution improvements. Additionally, in cases of continuous manipulation with successive queries that do not require an arm to return to its home configuration (such as in our assembly example), CTMP typically generates highly sub-optimal plans due to its reliance on a pre-defined set of possible start configurations. 

Motivated by these challenges, we introduce an anytime variant of these algorithms. This variant employs a preprocessing phase, similar to \cite{ctmp} and \cite{ConveyerCTMP}, that enables the rapid generation of an initial solution — though potentially quite suboptimal — within a constant time constraint. Unlike previous approaches though, the algorithm progressively improves solution quality while adhering to the defined time budget, which can be determined by user-defined limits or based on previous execution times.
In the subsequent sections, we describe the algorithm, provide theoretical analysis of the proposed anytime refinement showing that it converges to an optimal solution if time permits, and demonstrate its utility in the domain of assembly both in simulation and on a physical robot.

\section{Related Work}
\subsection{Preprocessing-based planning}
\label{rel:A}
Preprocessing-based planning techniques are frequently used to offer real-time planning advantages. A common algorithm for doing so is the Probabilistic Roadmaps (PRM), which efficiently preprocesses known environments by generating a dense roadmap, allowing for quick online query responses \cite{PRM}. Nevertheless, queries might not always connect to the roadmap due to PRM's asymptotic guarantees \cite{PRM_guarantees}. Additionally, the connection process relies on the main bottle-neck in manipulation planning: collision-detection, which is computationally a very expensive process. Recent work has focused on decomposing the configuration space into a set of collision-free convex sets \cite{polyhedral-convex, np-convexsets}, and subsequently finding smooth trajectories within these sets using optimization methods \cite{marcucci2023shortest, mp_aroundCS, geodesic_gcs}. However, since these methods solve an optimization problem over the entire space for each query, they do not provide constant planning time bounds.

An alternative class of preprocessing-based methods rely on past experiences to speed up search \cite{experience_ken, experience_nik, egraphs}. While speeding up search, none of the aforementioned algorithms can provably ensure fixed planning-time guarantees.

Recently, a planning approach with provable constant-time guarantees (CTMP) has been introduced \cite{ctmp, APP, ConveyerCTMP}. In all its variations, this method relies on a preprocessing phase where a library of intelligently computed paths is utilized during the online phase. In particular, \cite{ctmp} introduces a framework for decomposing a pre-defined region-of-interest (RoI) within a static environment into a set of sub-regions. By computing only one representative path per each sub-region, it enables an online planner to provably guarantee finding a plan to any state within the entire RoI within a user-defined (small) time bound. This concept was subsequently expanded to encompass static environments featuring dynamic goal objects \cite{ConveyerCTMP}, as well as semi-static environments \cite{APP}. However, these methods usually generate solutions within a few milliseconds and do not exploit the entire time budget available to them during an online phase. Furthermore, although beneficial for generating fast solutions, these planners operate under the assumption of a predefined set of home configurations (start states) and disregard task sequences. This limitation forces successive plans to traverse the same state, frequently resulting in highly suboptimal paths for manipulation tasks like assembly operations.

\subsection{Anytime Heuristic Search}
\label{rel:B}

In scenarios where planning time is restricted, an anytime search algorithm, becomes advantageous, as it aims to rapidly compute an initial, possibly suboptimal solution, and subsequently refine it as time permits. To achieve anytime characteristics in heuristic search, typical approaches involve obtaining fast initial solution through algorithms like beam search, and then progressively increase the number of expandable nodes at a given depth level \cite{anytime_beam, incremental_beam_search, triangle_beam, awa*}. Alternatively, and widely used in robotics, anytime algorithms often incrementally adjust the heuristic inflation \cite{anytime_dynamicA*, anytime_heu, anytime_focal, ana*, anytime_egraphs, amha*}.
Nonetheless, none of the above provide planning time bounds for the initial solution. 

\subsection{Assembly Planning}
\label{rel:C}

The assembly planning problem involves devising a systematic and efficient sequence of actions and motions required to assemble a complex product from individual components. 
Many such tasks fall under Task and Motion Planning (TAMP), which integrates high-level task planning and low-level motion planning \cite{Hartmann_rearrange, tamp}. To address assembly's sequential nature, research often targets optimal sequences \cite{Hartmann_rearrange, wan2016integrated, hartmann2020robust}. However, typically used motion planners, such as Rapidly-Exploring Random Trees (RRTs) \cite{rrt, RRT*, RRT-Connect} and PRMs \cite{PRM}, overlook this repetitive and sequential nature. Here, we focus on developing a motion planner that provides completeness and constant-time planning guarantees, using the repetitive nature of the assembly problem for generating fast feasible solutions that are progressively being optimized until the permitted planning time expires..

\section{Preliminary}
\label{Prob}
Consider $\mathcal{X}$ as the state space of a robot $\mathcal{R}$ operating within a semi-static environment $\mathcal{W} \subset \mathbb{R}^3$.
We assume $\mathcal{R}$ to be controllable and focus on kinematic planning. Consequently, plans are presumed reversible and feasible in both directions.
Let $\mathcal{G} \subseteq \mathcal{X}$ denote a region-of-interest (RoI), with $\mathcal{G} = \bigcup\limits_{i=1}^n \mathcal{G}_i$ which may potentially consist of a set of disjoint goal regions that we call \textit{local-RoIs} $\mathcal{G}_i$. In an assembly context, these local-RoIs might represent picking or placing regions. Given a goal state $g \in \mathcal{G}$ (e.g., grasping, placing), our objective is to plan the robot's motion to it. We aim to generate these motions quickly within an upper-bound time constraint $T_{bound}$. However, a trade-off often exists between planning time and path optimality. We note two key observations: Firstly, instances arise where a plan can be computed within a timeframe faster than $T_{bound}$, affording room to refine the solution using the time difference. Secondly, in domain with sequential nature, the time constraint may prove flexible. If an ongoing path remains incomplete, and there is $T_{left}$ until its execution concludes, the planning time limit can be extended by $T_{left}$.

We formulate the planning problem as a graph search by discretizing the planning space and representing the RoI (goal region) as a set of states within the graph located in that region. 

\begin{figure}[!ht]
\centering
\includegraphics[width=0.48\textwidth]{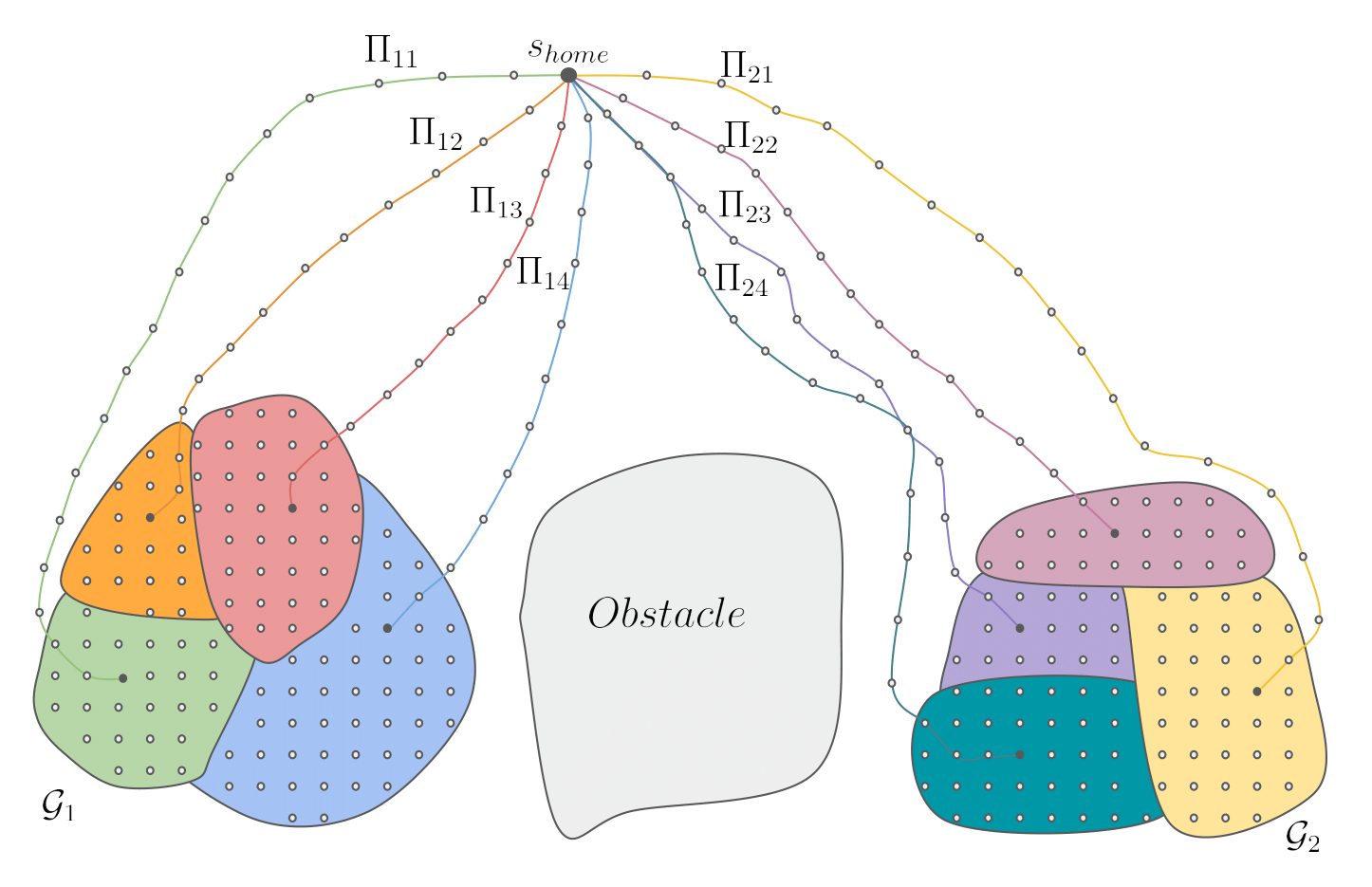}
\caption{Illustration of the preprocessing phase. For each local-RoI $\mathcal{G}_i$, we compute neighborhoods, distinguished by different colors, to finally form the cover of $\mathcal{G}$. Moreover, each neighborhood is associated with a path $\Pi_{ij}$ from $s_{home}$ to an \textit{attractor state}, ensuring that every state within that neighborhood is constant-time feasible from the attractor state.}
\label{fig:preprocess}
\end{figure}
Previously presented in \cite{ctmp, ConveyerCTMP}, we re-introduce the notation of \textit{reachable} states.
\begin{mydef}
\label{def:rechable}
    A goal state $g \in \mathcal{G}$ is \textbf{reachable} from a state $s$ if there exists a path from $s$ to $g$ and it can be computed in finite time.
\end{mydef}
We wish for our framework to be able to plan a path from any possible state that the robot can be at, to any reachable goal state $g \in \mathcal{G}$ within $T_{bound}$. Thus, we first define the following:

\begin{mydef}
\label{def:ct_feasible}
    We say that a reachable state $g \in \mathcal{G}$ is \textbf{constant-time feasible} from a state $s$ if a planner can find a path to it within $T_{bound}$.
\end{mydef}

To achieve this, in an offline phase we decompose our RoI $\mathcal{G}$ into a collection of sets referred to as the \textit{cover}, comprising sub-regions called \textit{neighborhoods}. Formally, we define them as follows:

\begin{mydef}
    A \textbf{cover} of $\mathcal{G}$ denoted as $\mathcal{O}$, is a collection of sets $o_i$ each called a \textbf{neighborhood} for which:
    \begin{itemize}
        \item $o_i \subset \mathcal{X}$ for each $o_i \in \mathcal{O}$.
        \item $\mathcal{G} \subseteq \bigcup\limits_{o_i \in \mathcal{O}} o_i$.
    \end{itemize}
\end{mydef}

Our aim is to use this decomposition to ensure constant-time feasibility for all reachable states within each neighborhood in online settings.

\section{Algorithmic Approach}
\label{alg_app}


\subsection{Preprocessing Phase}
\label{alg_app:preprocess}

Our approach to obtaining an initial solution within a constant-time framework is built upon a preprocessing phase that enables efficient online computations. Similarly to \cite{ctmp, ConveyerCTMP, APP}, during the offline phase, we assume access to a planner $\mathcal{P}$ and a predefined home state $s_{home}$ (the framework can also be extended to support a finite set of home configurations). A straightforward approach might involve computing and storing paths from $s_{home}$ to all states in $\mathcal{G}$ and all feasible paths between any $s_i, s_j \in \mathcal{G}$ pairs. However, this approach demands excessive memory usage. Alternatively, as depicted in Fig. \ref{fig:preprocess}, we can construct neighborhoods around specific states, called \textit{attractor states}, to form a cover $\mathcal{O}$ of $\mathcal{G}$. We construct these neighborhoods such that each state within them can easily reach an attractor state through a path computable within a (small) bounded time.
One method, similar to \cite{ctmp}, involves creating the neighborhood with a navigation function (e.g., Euclidean distance) such that from any state within the neighborhood, always transitioning to a state that maximally reduces the specified navigation function guarantees a path leading to the attractor state. Another method, akin to \cite{ConveyerCTMP}, guarantees that the path corresponding to a neighborhood's attractor state can be utilized as an experience, enabling constant-time planning to all states within the neighborhood. In this work, we adopt the first method and independently preprocess all $\mathcal{G}_i$, with a shared home state $s_{home}$ as described in Alg. \ref{alg:preprocess}.

\begin{algorithm}[!ht]
    \caption{Preprocess}
    \label{alg:preprocess}
    \footnotesize
    \SetKwInOut{Input}{Input}
    \SetKwInOut{Output}{Output}
    \SetKwFunction{ObjectPathPairs}{\scriptsize ObjectPathPairs}
    \SetKwFunction{SampleValidUncovState}{\scriptsize SampleValidUncovState}
    \SetKwFunction{ConstrcutNeighborhood}{\scriptsize ConstrcutNeighborhood}
    \SetKwFunction{InitHashMap}{\scriptsize InitHashMap}
    \Input{$s_{home}$: Home state for all local-RoIs \newline
        $\mathcal{P}$: Planner \newline
        $\mathcal{G}$: RoI }
    \Output{Library $\mathcal{L}$ maps local-RoIs to their cover, containing neighborhoods and their corresponding paths}
    \SetAlgoLined\DontPrintSemicolon
    \SetKwFunction{proc}{Preprocess}
    \SetKwProg{myproc}{Procedure}{}{}
    \myproc{\proc{$s_{home}$, $\mathcal{P}$, $\mathcal{G}$}}{
    $\mathcal{L} \leftarrow $ \InitHashMap() \tcp*{A hashmap from a local-RoI to its data} 
    \For{$\mathcal{G}_i$ in $\mathcal{G}$}{
        $\mathcal{G}_i^{covered} \leftarrow \emptyset$ \;
        \While{$\mathcal{G}_i \setminus \mathcal{G}_i^{covered} \neq \emptyset$}{
            $s_{attractor} \leftarrow $ \SampleValidUncovState($\mathcal{G}_i$)\;
            $\Tilde{\Pi} = \mathcal{P}.PlanPaths(s_{attractor}, s)$  \tcp*{Compute a set of representative paths}
            \If{$\Tilde{\Pi} = \emptyset$ \tcp*{No valid path}}{continue\;}
            $o \leftarrow$ \ConstrcutNeighborhood($s_{attractor}$) \; 
            $\mathcal{G}_i^{covered} \leftarrow \mathcal{G}_i^{covered} \bigcup o$ \;
            
            $\mathcal{L}[\mathcal{G}_i] \leftarrow \mathcal{L}[\mathcal{G}_i] \bigcup (s_{attractor}, o, \Tilde{\Pi})$\;
        }
    }
    \KwRet $\mathcal{L}$\;}
\end{algorithm}

The algorithm iterates through all local-RoIs ($\mathcal{G}_i$), computing the neighborhoods to add to the cover (line 3). For every local-RoI, it samples a valid yet uncovered state (line 6). If a feasible path exists to this state, the algorithm chooses it as attractor state and expands a neighborhood around it (line 11). Adhering to the approach outlined in \cite{ctmp}, during the expansion of the neighborhood, we look for all states that can construct a path to the attractor state by always moving to a successor that decreases the navigation function the most. Additionally, the sampling process can be made informed and efficient, rather than being completely random. We continuously track the frontier of the expanded neighborhood during its construction. Once a neighborhood is returned and the sampler is invoked again, the cached frontier set comprises valid states that lie outside the bounds of the previously constructed neighborhood. Subsequently, we attempt to sample the next attractor state from this frontier set (More details can be found in \cite{ctmp}).

\subsection{Online Phase}
\label{alg_app:online}

\subsubsection{Initial Solution}
During the online phase, our aim is to leverage preprocessed data to rapidly generate initial plans within a specified time limit. The core concept is straightforward: utilizing a lookup approach, we can associate the queried point $g \in G$ with its corresponding neighborhood and ascertain the path from $s_{home}$ to $s_{attractor}$ associated with this specific neighborhood followed by connecting $g$ to $s_{attractor}$. 
The connection from g to $s_{attractor}$ corresponds to following a sequence of states from $g$ that minimize the navigation function w.r.t. $s_{attractor}$ (e.g., decreasing Euclidean distance w.r.t. $s_{attractor}$). Having the assumption of reversibility, we concatenate the path from $s_{home}$ to $s_{attractor}$ with the reversed path from $s_{attractor}$ to $g$.

Having access to preprocessed data where all valid states $g \in \mathcal{G}$ are constant-time feasible from $s_{home}$ we claim the following:

\begin{mydef}
    Consider the collection of all states that are constant-time feasible within $\mathcal{G}$ from $s_{home}$:
    $$\mathcal{F} = \{g \in \mathcal{G} | \textit{ g is constant-time feasible from } s_{home}\}$$ 
    Furthermore, let $\rho$ denote the set of all states across the precomputed paths:
    $$ \rho = \{s \in \bigcup\limits_{i=1}^n \bigcup\limits_{j} \Pi_{ij}\} $$  
    The set of all robot \textbf{potential states} is defined as:
    $$\Phi = \mathcal{F} \bigcup \rho$$
\end{mydef}

\begin{pro}
\label{prop:all_connected}
    All reachable goal poses $g \in \mathcal{F}$ are constant-time feasible from any potential state $s \in \Phi$.
\end{pro}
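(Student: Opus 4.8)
The plan is to fix an arbitrary reachable goal $g \in \mathcal{F}$ and an arbitrary potential state $s \in \Phi$, and to exhibit an online procedure that returns a feasible path from $s$ to $g$ using only (i) hash-map retrievals from the library $\mathcal{L}$ and (ii) a bounded number of greedy descents of the navigation function, so that its running time stays within $T_{bound}$. Throughout I would lean on the reversibility assumption, which lets every stored or greedily-traced segment be traversed in either direction, and on the fact that feasibility of each individual piece is inherited from the preprocessing. The starting point is what membership $g \in \mathcal{F}$ already gives us: $g$ lies in some neighborhood $o_g \in \mathcal{O}$ with attractor $a_g$ and stored path $\Pi_g$ from $s_{home}$ to $a_g$, and — by the way neighborhoods are constructed in Alg.~\ref{alg:preprocess} — greedily following the navigation function from $g$ produces, in bounded time, a path to $a_g$. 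Concatenating $\Pi_g$ with the reverse of this descent yields a ``canonical'' path $\sigma_g$ from $s_{home}$ to $g$ that the online planner can build within $T_{bound}$; the rest of the argument is to splice a cheap path from $s$ to $s_{home}$ onto the front of $\sigma_g$.

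I would then split on which part of $\Phi = \mathcal{F} \cup \rho$ contains $s$. If $s \in \rho$, then $s$ appears on some stored path $\Pi_{ij}$ from $s_{home}$ to an attractor; the portion of $\Pi_{ij}$ between $s_{home}$ and $s$, reversed, is already an explicit path from $s$ to $s_{home}$ obtained by pure retrieval, so the full path $s \to s_{home} \to g$ costs only retrievals plus the single greedy descent hidden inside $\sigma_g$. If $s \in \mathcal{F}$, then $s$ lies in some neighborhood $o_s$ with attractor $a_s$ and stored path $\Pi_s$; greedily descending the navigation function from $s$ gives, in bounded time, a path from $s$ to $a_s$ (this is exactly the defining property of $o_s$), and reversing $\Pi_s$ gives a path from $a_s$ to $s_{home}$, after which we append $\sigma_g$. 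These two cases are exhaustive, so in every case a path from $s$ to $g$ is produced by $O(1)$ retrievals and at most two greedy descents.

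The main obstacle — and the step I would spend the most care on — is the timing bookkeeping in the second case: unlike the original CTMP query, which pays for only one navigation-function descent, the worst case here strings together two of them (one to leave $s$, one to enter $g$) on top of the retrievals. I would handle this by recalling that each descent is bounded by a constant fixed during preprocessing (essentially the neighborhood diameter divided by the per-step cost) and that retrievals are $O(1)$, so the total is still a constant; to land it cleanly inside the \emph{same} $T_{bound}$ one can require during preprocessing that a single descent cost at most $T_{bound}/2$ — equivalently, run \texttt{ConstructNeighborhood} against the halved bound — which the cover construction can always enforce by refusing to grow a neighborhood past that radius. A secondary point worth a sentence is exhaustiveness of the case split: one must invoke the definition $\Phi = \mathcal{F} \cup \rho$ to rule out a potential state that is neither covered by a neighborhood nor retrievable along a stored path, since such a state would admit neither of the two prefixes above.
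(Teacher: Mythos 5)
Your proposal is correct and follows essentially the same route as the paper's (one-sentence) proof: both $s$ and $g$ are constant-time feasible from $s_{home}$, so by reversibility one concatenates the reverse of the $s_{home}\!\to\!s$ path with the $s_{home}\!\to\!g$ path. Your added case split on $\Phi = \mathcal{F}\cup\rho$ and the observation that the worst case chains two greedy descents (so $T_{bound}$ must be budgeted accordingly during preprocessing) are refinements the paper leaves implicit, but they do not change the argument.
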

Since both states $s\in \Phi$ and $g \in G$ are constant-time feasible from $s_{home}$, and considering the reversibility of paths, we can directly concatenate one path with the reverse of the other path. 

\begin{algorithm}[!ht]
    \caption{Query}
    \label{alg:query}
    \footnotesize
    \SetKwInOut{Input}{Input}
    \SetKwInOut{Output}{Output}
    \SetKwFunction{FindRepPath}{\scriptsize FindRepPath}
    \SetKwFunction{Connect}{\scriptsize Connect}
    \SetKwFunction{GetCurrentTime}{\scriptsize GetCurrentTime}
    \SetKwFunction{Reverse}{\scriptsize Reverse}
    \SetKwFunction{ExtractPath}{\scriptsize ExtractPath}
    \SetKwFunction{AnytimeRefinement}{\scriptsize AnytimeRefinement}
    
    \Input{$s$: start state ($s \in \Phi$) \newline
        $g$: goal state ($g \in \mathcal{G}$) \newline
        $s_{home}$: Home state \newline
        $T_{bound}$: Time budget \newline
        $\mathcal{L}$: The preprocessed library}
    \Output{Path $\Pi$ from start state to goal state}
    \SetAlgoLined\DontPrintSemicolon
    \SetKwFunction{proc}{Query}
    \SetKwProg{myproc}{Procedure}{}{}
    \myproc{\proc{$s$, $g$, $s_{home}$, $T_{bound}$, $\mathcal{L}$}}{
    $T_{start} = \GetCurrentTime()$\;
    \eIf{$\Pi_{home, g} \leftarrow $ \FindRepPath($g$, $\mathcal{L}$) $\neq \emptyset$ \tcp*{Find the local-RoI, the specific neighborhood and the corresponding representative path}}{
        $\Pi_{home, g} \leftarrow \Connect(\Pi_{home, g}, g)$ \tcp*{Connect $g$ to the path (e.g., greedy search)}
        \eIf{$s \neq s_{home}$}{
            $\Pi_{home, s} \leftarrow$ \FindRepPath($s$, $\mathcal{L}$) \;
            \If{$\Pi_{home, s} = \emptyset$}{
                \KwRet $\emptyset$\;
            }
            $\Pi_{home, s} \leftarrow \Connect(\Pi_{home, s}, s)$\;
            $\Pi = \Reverse(\Pi_{home, s}) \cdot \Pi_{home, g}$ \tcp*{Concatenate the two paths}
        }{ $\Pi = \Pi_{home, g}$}
        $\Pi \leftarrow \AnytimeRefinement(s, g, \Pi, T_{start}, T_{bound})$\;
        \KwRet $\Pi$\;        
    }{
        \KwRet $\emptyset$\;
    }
}

\end{algorithm}

\begin{algorithm}
    \caption{Anytime Refinement}
    \label{alg:refine}
    \footnotesize
    \SetKwInOut{Input}{Input}
    \SetKwInOut{Output}{Output}
    \SetKwFunction{FindRepPath}{\scriptsize FindRepPath}
    \SetKwFunction{Connect}{\scriptsize Connect}
    \SetKwFunction{GetCurrentTime}{\scriptsize GetCurrentTime}
    \SetKwFunction{Reverse}{\scriptsize Reverse}
    \SetKwFunction{ExtractPath}{\scriptsize ExtractPath}

    \Input{$s_{start}$: start state ($s_{start} \in \Phi$) \newline
    $s_{goal}$: goal state ($s_{goal} \in \mathcal{G}$) \newline
    $\Pi$: Initial path \newline
    $T_{start}$: Start time of the query \newline
    $T_{bound}$: Time budget given}
    \Output{Refined Path $\Pi$}
    \SetAlgoLined\DontPrintSemicolon
    \SetKwFunction{procrefine}{AnytimeRefinement}
    \SetKwProg{myproc}{Procedure}{}{}
    \myproc{\procrefine{$s_{start}$, $s_{goal}$, $\Pi$, $T_{start}$, $T_{bound}$}}{
        CLOSED = INCONS = $\emptyset$ \;
        OPEN = \{$\Pi$\} \tcp*{Insert all states in $\Pi$ with their corresponding g-value}
        $C = g(s_{goal})$; $\epsilon = \max\limits_{s \in \Pi} (\frac{C - g(s)}{h(s)+ \delta})$ \tcp*{$0 < \delta \ll 1$}\;
        \While{$(\GetCurrentTime() - T_{start}) < T_{bound}$ and $ \epsilon > 1$}{
            \While{$(\GetCurrentTime() - T_{start}) < T_{bound}$ and OPEN $\neq \emptyset$}{
                $s_{min} = \min\limits_{s \in OPEN} (g(s) + \epsilon \cdot h(s))$\;
                \If{$s_{min} = s_{goal}$}{
                    $\Pi \leftarrow \ExtractPath()$\;
                    break\;
                }
                insert $s_{min}$ into CLOSED\;
                \For{$s' \in Successors(s_{min})$}{
                    \If{$s'$ was not visited before}{$g(s') = \infty$\;}
                    \If{$g(s') > g(s_{min}) + c(s, s')$}{
                        $g(s') = g(s_{min}) + c(s, s')$ \;
                        \eIf{$s' \notin$ CLOSED}{insert $s'$ into OPEN}{insert $s'$ into INCONS}
                    }
                }
            }
            CLOSED $\leftarrow \emptyset$ \;
            $C \leftarrow  g(s_{goal})$\; 
            $\epsilon \leftarrow \min(\max\limits_{s \in \Pi} (\frac{C - g(s)}{h(s) + \delta}), \max\limits_{s \in \textit{OPEN}} (\frac{C - g(s)}{h(s) + \delta}))$\;
            OPEN $\leftarrow$ OPEN $\bigcup$ INCONS $\bigcup \Pi$\;            
        }
    \KwRet $\Pi$\;}
\end{algorithm}

\subsubsection{Solution Refinement}

In this section, we present the key contribution of the paper.
Given that the initial path is derived from lookups and processes which do not require validity checks, the querying process typically takes up to a few milliseconds (Alg. \ref{alg:query}, lines 2-12). Nonetheless, the allocated time budget is frequently longer and can be utilized to improve the initial solution. The initial paths can be suboptimal since they first move towards the attractor state and then proceed toward the actual goal. Additionally, a more pronounced issue arises in the case of two sequential queries: after completing the first query, the arm must navigate all the way back to its home state before it can proceed to its new goal.

With an initial path and remaining planning time, our objective is to employ this initial solution as an experience for an anytime refinement search (Alg. \ref{alg:refine}).
As presented in Sec. \ref{rel:B}, one approach to conducting an anytime heuristic search involves progressively reducing the heuristic's inflation. We adopt a methodology similar to \cite{anytime_dynamicA*}, with adaptations. 

The anytime-repairing A* algorithm (ARA*) operates through a sequence of weighted A* iterations, gradually reducing heuristic inflation. In standard A*, expanded states are ensured to be consistent and thus not re-expanded. However, in weighted A*, due to heuristic inflation, inconsistent heuristics can be employed in practice, resulting in potential multiple expansions for a single state. ARA* addresses this by limiting expansions to once per state and designating states with improved g-values as \textit{locally-inconsistent}, adding them to an INCONS list. At the conclusion of an iteration and upon achieving a corresponding sub-optimal solution, the subsequent iteration's open-list is initialized with previously identified inconsistent states, enabling the reuse of earlier computational efforts.
A key limitation of ARA* is that it does not provide a guarantee of finding the initial solution within a specific time bound. Moreover, particularly in scenarios involving high-dimensional state spaces, finding the initial solution is often non-trivial.

We observe that given the initial solution, and knowing it is likely to be suboptimal, we consider it to be a collection of potentially inconsistent states. Thus, we initiate the open list of the search with all the states on the path along with their current g-values (Alg. \ref{alg:refine} line 3). This approach facilitates the potential re-expansion of these specific states, thereby enabling the search to discover improved paths towards them. Nevertheless, this process relies on the heuristic inflation value: the higher it is, the less exploration we undertake, given that the goal state already exists in the open list with a heuristic value of zero. We can leverage this observation and balance between exploration and quick refinements. 

As suggested in \cite{seq_align_ara*} and demonstrated in \cite{anytime_dynamicA*}, the suboptimality of a path can be more tightly bounded. Between successive executions searches, a tighter bound can be established using the current path cost $C$. Unlike ARA*, where this insight serves solely to report suboptimality, here we exploit the value of $C$ to determine both the initial weight and its subsequent updates (Alg. \ref{alg:refine}, lines 4 and 24).

Given the current path cost $C$, which is effectively the g-value of the goal state, 
we want at least one state $s$ from the open list or the current path such that:
\begin{equation}
\label{eq:expansion_condition}
    g(s) + \epsilon \cdot h(s) < C
\end{equation}
Which leads us to the following inequality condition
\begin{equation}
\label{eq:expansion_ineq}
    \epsilon < \frac{C - g(s)}{h(s)}
\end{equation}
Inequality \ref{eq:expansion_ineq} signifies that, as the goal state exists within the open list at the beginning of each iteration, in order for it not to be expanded prematurely, the inflation of the heuristic must be less than the above expression. However, as the time available for refinement is determined online, our objective is for each iteration to improve the solution as fast as possible. 

Consequently, for the first iteration, we opt to initialize the weight with the maximum attainable value that still guarantees at least one expansion from the initial path (Alg. \ref{alg:refine} line 4, where $\delta$ assists us in dealing with singularities while ensuring the satisfaction of the inequality condition).

After a search iteration, a new weight must be set for the next iteration, aiming to expand at least one non-goal state. However, re-establishing the weight as in line 4 could risk undermining convergence to an optimal solution. If the path remains unimproved, $C$ and path states persist. Attempting to reset inflation the same way could initiate another search with the same weight, hindering progress. To address this, between iterations, we set the new weight as the minimum between the maximum current path value and the maximum OPEN list value (Alg. \ref{alg:refine} line 24). 

\subsection{Anytime Refinement: Theoretical guarantees}
Given Infinite time budget, we want to show that our algorithm converges to the optimal solution. 
\begin{lem}
    \label{lem:limit}
    Assuming a consistent (monotonic) heuristic function $h:\mathcal{X} \rightarrow \mathbb{R}$ (which implies admissibility, i.e., it never overestimates the cost to reach the goal), and denoting the optimal solution cost as $C^*$, we observe that if, at any given iteration, $\epsilon = 1$, then the resultant cost is $C = C^*$.
\end{lem}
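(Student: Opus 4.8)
The plan is to reduce the $\epsilon = 1$ iteration to a textbook run of $A^*$ under a consistent heuristic and then invoke the optimality of $A^*$. First I would observe that when $\epsilon = 1$ the selection step in Alg.~\ref{alg:refine} picks $s_{min} = \arg\min_{s \in \mathrm{OPEN}}\bigl(g(s) + h(s)\bigr)$, i.e.\ exactly the $A^*$ priority, and the inner loop terminates precisely when $s_{goal}$ attains that minimum, which is the standard $A^*$ stopping rule. So the iteration differs from ordinary $A^*$ only in two respects: (i) OPEN is pre-seeded with the states of the incumbent path $\Pi$ (together with the INCONS states carried over from earlier iterations), each inserted with a finite $g$-value equal to the cost of an actual path from $s_{start}$ to it; and (ii) each state is expanded at most once, with improved states diverted to INCONS rather than reopened. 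I would then argue that, under the stated hypotheses, both deviations are immaterial.

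For (i): every $g$-value the algorithm ever stores is the cost of a concrete $s_{start}$-rooted path (the seeded values are costs along $\Pi$; all later updates are relaxations $g(s') \gets g(s_{min}) + c(s_{min},s')$), so $g(s) \ge g^*(s)$ at all times and $g$-values are non-increasing, with equality $g(s)=g^*(s)$ preserved once achieved. Hence the usual $A^*$ invariant still holds: as long as $s_{goal}$ has not yet been selected for expansion, OPEN contains some state $s'$ on an optimal $s_{start}$-to-$s_{goal}$ path with $g(s') = g^*(s')$; the standard inductive proof goes through verbatim because seeding only adds more legitimate over-estimates to OPEN and never removes such a witness. For (ii): this is where consistency of $h$ is essential. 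With $\epsilon = 1$ and $h$ consistent, the classical property of $A^*$ — that whenever a state is chosen for expansion its $g$-value already equals $g^*$ — holds (it follows from the invariant just stated, applied to optimal paths terminating at the expanded state). Therefore no already-expanded state is ever relaxed again, INCONS stays empty throughout this iteration, and the single-expansion restriction is vacuous: the iteration executes exactly as unmodified $A^*$ with a consistent, hence admissible, heuristic.

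I would then conclude: since $s_{goal}$ is present in OPEN from the outset (it is the final state of $\Pi$) and its $g$-value is bounded below by $C^* < \infty$ (the goal is reachable), the inner loop must eventually select $s_{goal}$ and terminate by extracting a path; by optimality of $A^*$ under a consistent heuristic, $g(s_{goal}) = g^*(s_{goal}) = C^*$ at that moment, and the algorithm then records $C \gets g(s_{goal}) = C^*$ at the close of the iteration, as claimed. Equivalently, one may invoke the ARA* sub-optimality bound $g(s_{goal}) \le \epsilon\,C^*$ directly — the modifications above do not affect its proof — and combine it with the trivial lower bound $g(s_{goal}) \ge C^*$; I would include the short self-contained derivation for completeness. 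The argument implicitly assumes the $\epsilon = 1$ iteration is permitted to run to completion rather than being truncated by $T_{bound}$, which is exactly the infinite-time-budget regime in which convergence is being asserted.

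The main obstacle is item (ii): one must check that ARA*'s defining efficiency trick — forbidding re-expansions and shelving improved states in INCONS — together with the non-standard seeding of OPEN does not silently break the classical $A^*$ optimality proof. The resolution hinges entirely on the observation that at $\epsilon = 1$ a consistent heuristic makes re-expansions unnecessary in the first place, so the trick is never exercised during that iteration; verifying this, and that the seeded $g$-values are honest path costs so the "optimal-path witness in OPEN" invariant survives, is the only non-routine part, everything else being a direct appeal to the known behaviour of $A^*$.
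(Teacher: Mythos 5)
Your proposal is correct and follows the same route as the paper's proof, which simply observes that at $\epsilon = 1$ the iteration reduces to A* with a consistent heuristic and invokes A*'s optimality. The paper states this in one sentence with a citation, whereas you additionally verify the two points it leaves implicit — that pre-seeding OPEN with honest path costs preserves the optimal-path-witness invariant, and that consistency at $\epsilon=1$ makes the no-re-expansion/INCONS mechanism vacuous — which is a more careful rendering of the same argument rather than a different one.
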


\begin{proof}
    The proof is straightforward. As A* search with a consistent heuristic function is guaranteed to return an optimal solution, as indicated in \cite{anytime_dynamicA*}, when the heuristic inflation reaches 1, our algorithm effectively operates as A*, ensuring the attainment of the optimal solution. 
\end{proof}

\begin{lem}
    \label{lem:monoton}
    The update rule for the inflation of the heuristic 
    $$\epsilon = \min(\max\limits_{s \in \Pi} (\frac{C - g(s)}{h(s) + \delta}), \max\limits_{s \in \textit{OPEN}} (\frac{C - g(s)}{h(s) + \delta})), \hspace{8pt} 0 < \delta \ll 1$$
    is \textbf{strictly decreasing}
\end{lem}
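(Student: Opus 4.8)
The plan is to show that the inflation value produced at the end of one iteration (Alg.~\ref{alg:refine}, line 24) is strictly smaller than the inflation value that governed that same iteration (set on line 4 for the first iteration, or on line 24 at the end of the previous iteration). Let $\epsilon_k$ denote the inflation used in iteration $k$, let $C_k$ be the path cost at the start of iteration $k$ (so $C_1 = g(s_{goal})$ for the initial path and in general $C_k$ is the best solution cost found so far), and let $C_{k+1}\le C_k$ be the cost after iteration $k$. I would argue in two exhaustive cases.

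First, I would handle the case where iteration $k$ improves the solution, i.e.\ $C_{k+1} < C_k$. The new inflation $\epsilon_{k+1}$ is taken as a minimum over states $s$ currently in $\Pi$ or OPEN of $(C_{k+1} - g(s))/(h(s)+\delta)$. Since weighted A* with inflation $\epsilon_k$ terminated by extracting $s_{goal}$, every state $s$ that was expanded (or is still in OPEN along the returned path) satisfies $g(s) + \epsilon_k h(s) \le C_{k+1}$, hence $\epsilon_k \le (C_{k+1} - g(s))/h(s)$; but the key point is that replacing $C_k$ by the strictly smaller $C_{k+1}$ in the numerator decreases each term, and on the goal state itself the term equals $(C_{k+1}-C_{k+1})/\delta = 0 < \epsilon_k$ — wait, I need to be careful here, since line 24 maxes over the path and over OPEN separately and then mins, and $s_{goal}\in$ OPEN with $h(s_{goal})=0$ would force $\epsilon_{k+1}=0$, which contradicts $\epsilon>1$ in the outer loop. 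So the correct reading is that the max over OPEN must implicitly exclude $s_{goal}$ (or OPEN is reset not to contain it); I would state this as the operative convention and then the argument is: each surviving non-goal state had $g(s)+\epsilon_k h(s) < C_k$ during expansion, and since $g$-values only decrease, $\epsilon_{k+1} = \min(\ldots) \le \max_{s\in\Pi}(C_{k+1}-g(s))/(h(s)+\delta) < \max_{s\in\Pi}(C_k - g(s))/(h(s)+\delta)$. Relating this last quantity back to $\epsilon_k$ requires knowing that $\epsilon_k$ itself was $\le \max_{s\in\Pi}(C_k-g(s))/(h(s)+\delta)$ evaluated with the old $g$-values, and that the old $g$-values dominate the new ones, so the inequality $\epsilon_{k+1} < \epsilon_k$ follows by monotonicity of the fraction in both $C$ and $g$.

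Second, I would handle the case where iteration $k$ fails to improve the solution, $C_{k+1} = C_k$. This is the case the authors explicitly flagged as delicate (the paragraph preceding line 24). Here $\Pi$ and hence $\max_{s\in\Pi}(C - g(s))/(h(s)+\delta)$ are unchanged, so I must extract the strict decrease from the $\max$ over OPEN term. The point is that during iteration $k$ at least one non-goal state $s_{min}$ with $g(s_{min}) + \epsilon_k h(s_{min}) \le \epsilon_k h(\cdot) + \ldots$, i.e.\ with $\epsilon_k = (C_k - g(s_{min}))/(h(s_{min})+\delta)$ exactly at the frontier, was expanded and removed from OPEN; meanwhile expansion can only have lowered $g$-values of states remaining in OPEN. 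I would argue that every state left in OPEN now has $(C_k - g(s))/(h(s)+\delta) < \epsilon_k$, since a state with fraction $\ge\epsilon_k$ would have satisfied $g(s)+\epsilon_k h(s)\le C_k$ and thus have been selected as $s_{min}$ and expanded before the iteration ended (using that the inner while loop runs until OPEN is empty or the time bound — so modulo the time bound, exhaustion of all such states is guaranteed). Hence $\epsilon_{k+1} = \min(\epsilon_k\text{-unchanged path term}, \text{strictly smaller OPEN term}) < \epsilon_k$.

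The main obstacle I anticipate is the interaction between the time-bounded inner loop and the claim ``all states with fraction $\ge\epsilon_k$ have been expanded'': if the inner loop is cut off by $T_{bound}$ rather than by OPEN becoming empty, a state that would make the OPEN term equal $\epsilon_k$ may survive, and then line 24 could produce $\epsilon_{k+1} = \epsilon_k$ rather than a strict decrease. I would resolve this either by restricting the lemma's statement to iterations that run to completion (OPEN emptied or goal extracted), or by noting that if the loop is cut off by time then the outer \texttt{while} condition fails on the next check and there is no ``next iteration'' for which monotonicity must hold — so the strict-decrease claim is vacuous exactly when it could fail. A secondary, more bookkeeping-level obstacle is pinning down the convention that the OPEN-side maximum ignores $s_{goal}$ (whose term is $0$); I would state this explicitly at the top of the proof, since otherwise the $\min$ collapses to $0$ and the lemma is false as literally written.
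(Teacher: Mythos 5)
Your case 2 already contains the paper's entire argument, and that argument applies uniformly --- the paper does not case-split on whether the cost improved. At the end of a completed inner loop the goal was popped as the minimum of OPEN with priority $g(s_{goal}) + \epsilon_i \cdot 0 = C$, so every state remaining in OPEN satisfies $g(s) + \epsilon_i h(s) \geq C$; the $+\delta$ in the denominator then gives the strict inequality $\frac{C-g(s)}{h(s)+\delta} < \epsilon_i$ for each such $s$, and line~24's $\min$ is bounded above by $\max_{s\in\textit{OPEN}}\frac{C-g(s)}{h(s)+\delta} < \epsilon_i$. The $\Pi$-term of the $\min$ never needs to be examined. Your case 1, by contrast, has a genuine logical gap: you establish that $\epsilon_k \le \max_{s\in\Pi}\frac{C_k-g(s)}{h(s)+\delta}$ and that $\epsilon_{k+1} < \max_{s\in\Pi}\frac{C_k-g(s)}{h(s)+\delta}$, i.e.\ both quantities sit below the \emph{same} bound, which says nothing about their relative order. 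The subsequent appeal to ``monotonicity of the fraction in $g$'' points the wrong way: $g$-values only decrease during an iteration, and decreasing $g(s)$ \emph{increases} $\frac{C-g(s)}{h(s)+\delta}$, so the updated fractions can exceed the old ones rather than fall below them. As written, case 1 does not go through; you would have to fall back on the OPEN-term argument there too, at which point the case split dissolves.

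Two smaller points. Your worry that $s_{goal}\in\textit{OPEN}$ ``forces $\epsilon_{k+1}=0$'' is a misreading: the goal's term $\frac{C-C}{0+\delta}=0$ enters a \emph{maximum} over OPEN (and over $\Pi$), not a minimum over individual states, so it is simply dominated by any other state's positive term; no convention excluding $s_{goal}$ is needed and the lemma is not false as literally written. On the other hand, your observation about the time-bounded inner loop is a real caveat that the paper's proof silently ignores: if the inner loop is cut off by $T_{bound}$ before the goal is re-extracted, the invariant $g(s)+\epsilon_i h(s)\ge C$ need not hold for all of OPEN, and your resolution --- the outer loop then also terminates, so there is no next iteration and the strict-decrease claim is vacuous --- is the correct way to patch it.
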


\begin{proof}
    Consider $\epsilon_i$ as the current heuristic weight, and $\epsilon_{i+1}$ as the weight for the next iteration. Furthermore, let $C$ represent the current iteration cost. We know that:
    $$g(s) + \epsilon_i \cdot h(s) \geq C , \hspace{8pt} \forall s \in \textit{OPEN}$$
    Additionally, let $\delta > 0$. Thus,
    $$\epsilon_i > \frac{C - g(s)}{h(s) + \delta}, \hspace{8pt} \forall s \in \textit{OPEN}$$
    This leads us to the result:
    $$\epsilon_i > \max\limits_{s \in \textit{OPEN}}\frac{C-g(s)}{h(s) + \delta} \geq $$ 
    $$\geq \min(\max\limits_{s \in \Pi} (\frac{C - g(s)}{h(s)+ \delta}), \max\limits_{s \in \textit{OPEN}} (\frac{C - g(s)}{h(s)+ \delta})) = \epsilon_{i+1}$$
\end{proof}

\begin{thm}
    Alg. \ref{alg:refine} is guaranteed to converge to the optimal solution.
\end{thm}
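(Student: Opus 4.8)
The plan is to combine the two lemmas already established with a termination-and-monotonicity argument. Lemma \ref{lem:monoton} tells us that the sequence of inflation factors $\epsilon_1 > \epsilon_2 > \epsilon_3 > \cdots$ produced by the outer loop is strictly decreasing, and Lemma \ref{lem:limit} tells us that the moment $\epsilon = 1$ is reached the reported cost equals $C^*$. So the entire burden of the theorem is to show that, given an unbounded time budget, the outer loop does not stall at some $\epsilon > 1$: the $\epsilon$-sequence must actually reach (or drop below) $1$ in finitely many iterations, at which point the while-condition $\epsilon > 1$ fails and we return an optimal $\Pi$.

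First I would argue that each outer iteration runs a complete weighted-A* search to completion (the inner loop exits via $s_{min} = s_{goal}$, not via a timeout, since time is unbounded), so each iteration terminates: the state space induced by the graph is finite, each state is expanded at most once per iteration because improved-but-closed states go to \texttt{INCONS} rather than back to \texttt{OPEN}, and \texttt{OPEN} is re-seeded with $\Pi$ and \texttt{INCONS} only between iterations. Hence iteration $i$ finishes with a well-defined solution cost $C_i = g(s_{goal})$ and a well-defined \texttt{OPEN} list, so $\epsilon_{i+1}$ is well-defined. Next I would quantify the decrease. The key inequality from the proof of Lemma \ref{lem:monoton} is $\epsilon_{i+1} \le \max_{s\in \textit{OPEN}} \frac{C_i - g(s)}{h(s)+\delta} < \epsilon_i$, but strictness alone does not forbid an infinite sequence converging to some limit $\epsilon_\infty > 1$. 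To rule this out I would use the finiteness and rationality/discreteness of the state space: the values $g(s)$ attained over the course of all iterations lie in a finite set (finitely many states, and the $g$-values along any computed path are finite sums of the finitely many distinct edge costs $c(\cdot,\cdot)$), and likewise $h(s)$ takes finitely many values; therefore the quantity $\frac{C_i - g(s)}{h(s)+\delta}$ ranges over a \emph{finite} set of reals across all iterations and all states. A strictly decreasing sequence drawn from a finite set must be finite, so the outer loop executes only finitely many iterations. Combined with the fact that it can only exit when $\epsilon \le 1$ (the solution has already been found by a consistent-heuristic search, which by Lemma \ref{lem:limit} — applied at the first index where $\epsilon \le 1$, noting $\epsilon=1$ gives $C=C^*$ and $\epsilon<1$ only strengthens the pruning bound while still expanding every state A* would with a consistent heuristic, hence still optimal) the returned path is optimal.

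The step I expect to be the main obstacle is precisely this finiteness/termination argument: the lemmas give strict monotonicity but not a uniform decrement, so I must justify carefully why $\epsilon_i$ cannot asymptote above $1$. The cleanest route is the discreteness observation above (finitely many achievable $(C,g(s),h(s))$ triples), but it relies on the graph-search discretization stated in Sec. \ref{Prob} and on edge costs taking values in a discrete set; I would state that assumption explicitly. An alternative, if one prefers not to invoke discreteness, is to observe that once $\epsilon_i$ is small enough that weighted A* with weight $\epsilon_i$ already returns the optimal cost $C^*$, the bound $\max_{s\in\Pi}\frac{C^*-g(s)}{h(s)+\delta}$ is a fixed number strictly less than... — but this still needs the sequence to get that small, so the discreteness argument remains the load-bearing one. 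A second minor subtlety worth a sentence: $\epsilon_{i+1}$ is the $\min$ of the path-based and \texttt{OPEN}-based maxima, and I should check the path-based term does not somehow keep $\epsilon$ bounded away from $1$ — but since $\Pi$ itself is updated to the latest (optimal, once reached) solution and $g(s_{goal})=C$ makes its contribution $\le \epsilon_i$ by the same reasoning, it only ever helps, never hurts.
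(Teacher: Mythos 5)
Your proposal follows the same decomposition as the paper --- combine Lemma \ref{lem:limit} with Lemma \ref{lem:monoton} --- but the paper's own proof is literally the one sentence ``follows directly from the two lemmas,'' whereas you correctly point out that this does not suffice: a strictly decreasing sequence $\epsilon_1 > \epsilon_2 > \cdots$ can asymptote to a limit strictly greater than $1$, so the lemmas alone never trigger the hypothesis of Lemma \ref{lem:limit}. Your discreteness argument (finitely many states, hence finitely many attainable $g$-, $h$- and $C$-values, hence finitely many possible values of $\frac{C-g(s)}{h(s)+\delta}$, hence a strictly decreasing sequence drawn from a finite set must terminate) is the right way to close that gap, and it is load-bearing content that the paper omits entirely; you are also right that it needs the graph/edge-cost discreteness assumption stated explicitly. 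One remaining soft spot in your write-up: when the update produces $\epsilon_{i+1} \le 1$, the outer while-loop exits \emph{without} running a search at that weight, so Lemma \ref{lem:limit} (which speaks of an iteration executed with $\epsilon=1$) never literally applies, and your parenthetical claim that ``the solution has already been found by a consistent-heuristic search'' is not accurate --- the last completed search used $\epsilon_i > 1$. The clean repair is to observe that the update formula itself is an optimality certificate: if $\max_{s\in\text{OPEN}}\frac{C-g(s)}{h(s)+\delta}\le 1$ then every open state satisfies $g(s)+h(s)\ge C-\delta$, which with a consistent heuristic bounds $C \le C^* + \delta$ and, under the same cost-discreteness you already invoke (with $\delta$ small enough), forces $C = C^*$; one should also check that the $\Pi$-branch of the $\min$ cannot trigger exit without this certificate. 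Neither the paper nor your proposal fully nails this last step, but yours is substantially more rigorous than the paper's.
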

\begin{proof}
    The proof follows directly from Lemma \ref{lem:limit} and \ref{lem:monoton}. 
\end{proof}

\section{Experiments}
To assess our approach, we conducted an assembly experiment using a 6-DOF robotic arm in simulation and demonstrated its capabilities in both simulated and real-world scenarios\footnote{See supplementary material for accompanying videos.}. 
\subsection{Manipulation Implementation: A Case Study on Assembly}
\label{sec:manipulation}
Let us have a robotic arm $\mathcal{R}$. Let its configuration space denoted as $\mathcal{C}$ and its task space be $\Gamma \subset SE(3)$. In our assembly task, we consider the general task of pick-and-place. Since we are constantly changing the manipulated regions by removing and placing objects, we define the local-RoIs as the \textit{pre-grasp} and \textit{pre-place} regions in $\Gamma$. Nonetheless, a path is being computed in $\mathcal{C}$. 

\begin{figure*}
      \centering
      \begin{subfigure}{0.24\linewidth}
        \centering\includegraphics[width=\linewidth]{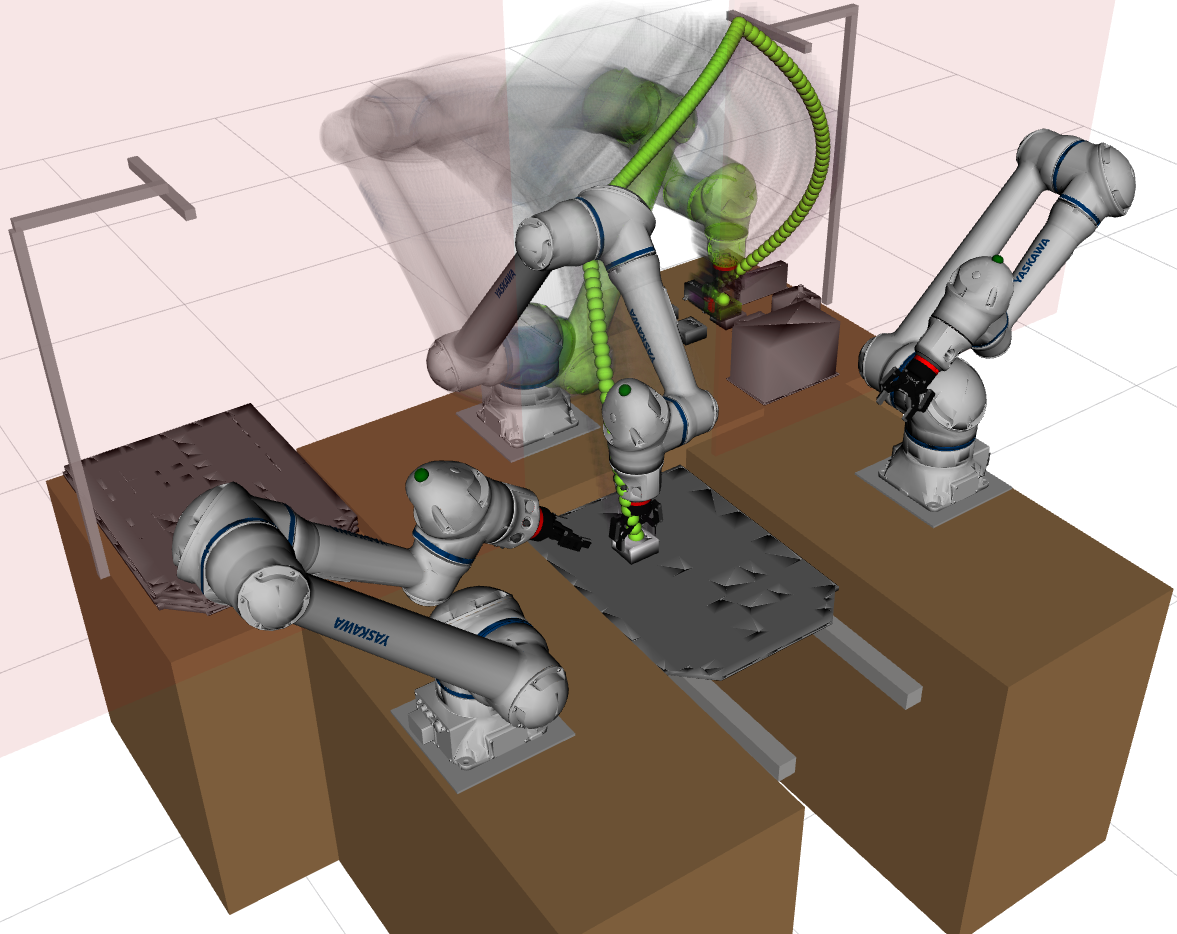}
        \caption{General view: \textbf{without} refinement}
        \label{fig:fig1} 
      \end{subfigure}%
      \hfill
      \begin{subfigure}{0.24\linewidth}
        \centering\includegraphics[width=\linewidth]{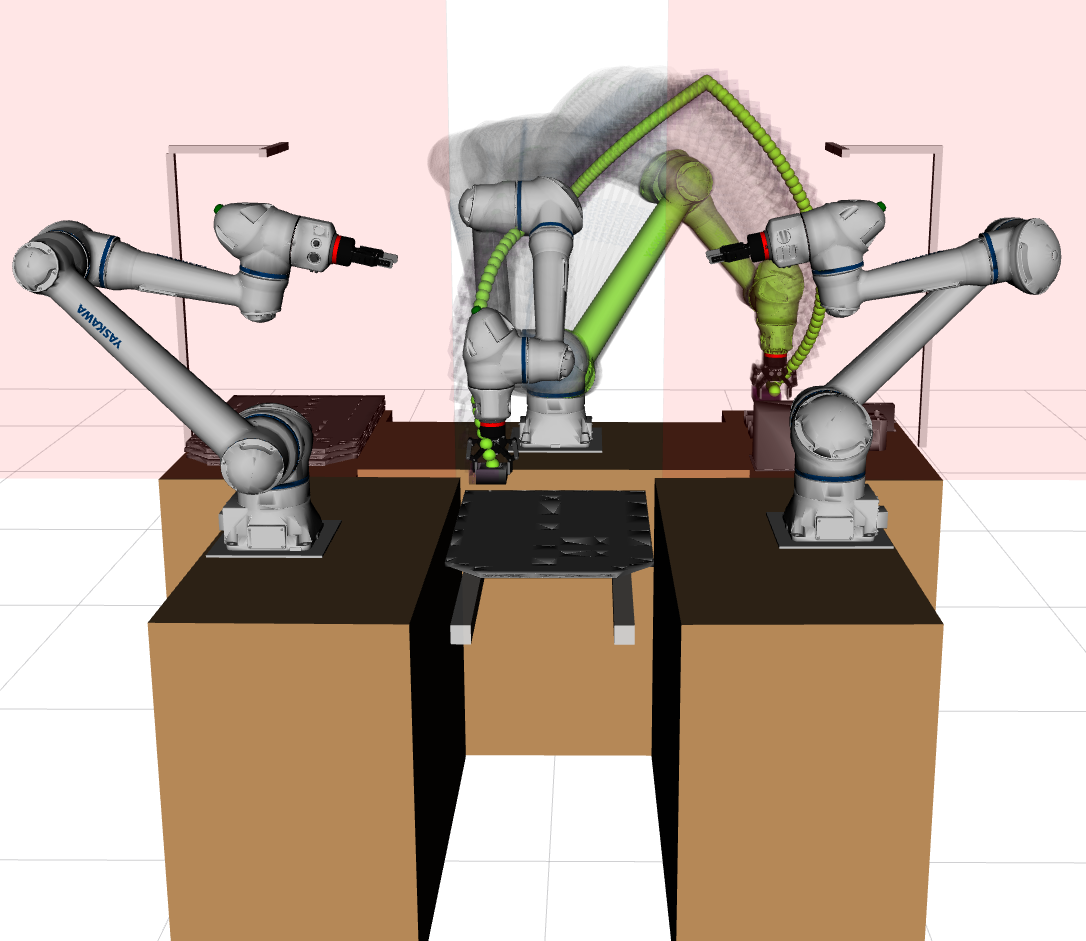}
        \caption{Front view: \textbf{without} refinement}
        \label{fig:fig2}
      \end{subfigure}
      \hfill  
      \begin{subfigure}{0.24\linewidth}
        \centering\includegraphics[width=\linewidth]{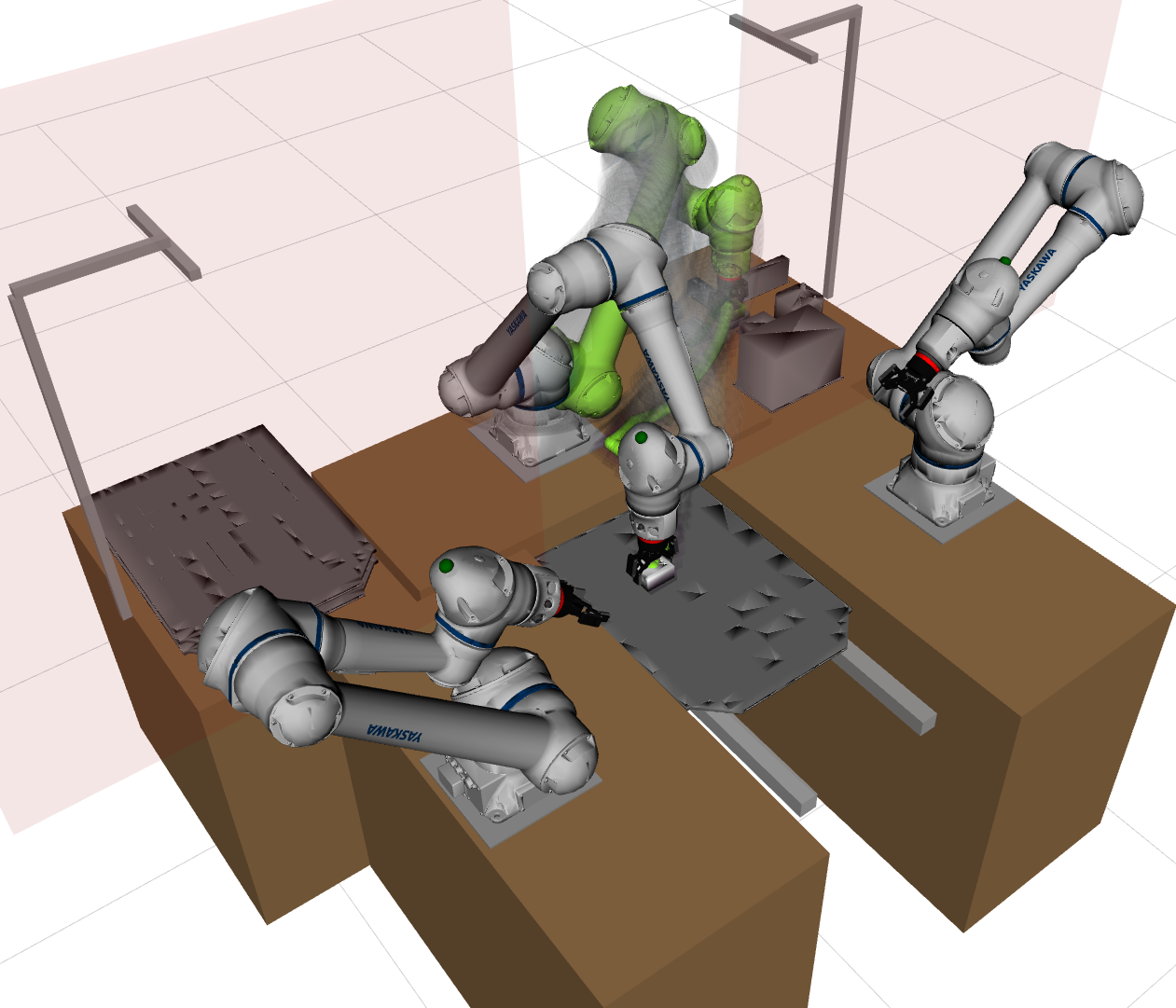}
        \caption{General view: \textbf{with} refinement}
        \label{fig:fig3} 
      \end{subfigure}%
      \hfill
      \begin{subfigure}{0.25\linewidth}
        \centering\includegraphics[width=\linewidth]{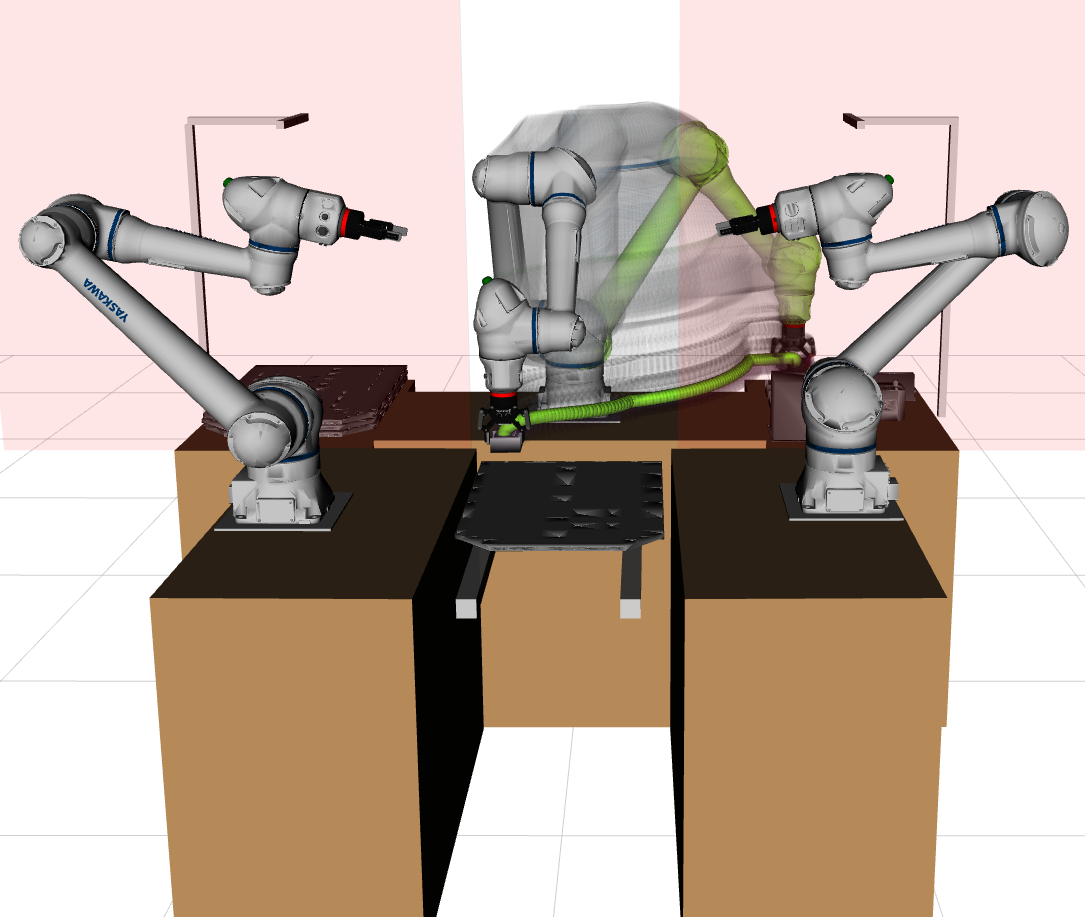}
        \caption{Front view: \textbf{with} refinement}
        \label{fig:fig4} 
      \end{subfigure}
      
    \caption{Comparing CTMP with and without refinement. Fig. \ref{fig:fig1} and \ref{fig:fig2} show general and front views of the path from pick pose (green) to place pose (solid gray), without refinement, having to pass through $s_{home}$. The green line is the end-effector trajectory. \textcolor{blue}{Planning time: 38 msec, cost: 82 steps, suboptimallity: 20.4}. Fig. \ref{fig:fig3} and \ref{fig:fig4} show general and front views with anytime refinement. Here, the path do not pass through $s_{home}$. \textcolor{blue}{Refining time: 1700 msec, cost: 32 steps, suboptimallity: 4.9}.}
    \label{fig:anytime_refinment_action}
\end{figure*}

\begin{table*}[!ht]
    \caption{Experimental results comparing our method with 7 different planners}
    \label{tab:results}
    \centering
    \resizebox{\textwidth}{!}{%
    \begin{tabular}{|c|c|c|c|c|c|c|c|c|c|}
    \hline
    \rowcolor[HTML]{C0C0C0} 
    \cellcolor[HTML]{656565} &  & \textbf{Our Method} & \multicolumn{1}{l|}{\cellcolor[HTML]{C0C0C0}Shortcut-CTMP} & CTMP & PRM & RRT* & ARA* & E-Graphs & RRT-Connect \\ \hline
    \rowcolor[HTML]{EFEFEF} 
    \cellcolor[HTML]{EFEFEF} & Success rate {[}\%{]} & {\color[HTML]{3531FF} \textbf{100}} & 100 & 100 & 97 & 84 & 75 & 90 & 96 \\ \cline{2-10} 
    \rowcolor[HTML]{EFEFEF} 
    \cellcolor[HTML]{EFEFEF} & Avg. Path Cost {[}steps{]} & {\color[HTML]{3531FF} \textbf{16.7}} & 27.8 & 44.1 & 55.3 & 49.4 & 22.4 & 18.3 & 58.7 \\ \cline{2-10} 
    \rowcolor[HTML]{EFEFEF} 
    \multirow{-3}{*}{\cellcolor[HTML]{EFEFEF}\begin{tabular}[c]{@{}c@{}}Using $s_{home}$ \\ as start state\end{tabular}} & Planning time {[}msec{]} & $500 \pm 0$ & $244 \pm 223$ & $17 \pm 1$ & $90 \pm 5$ & $500 \pm 0$ & $500 \pm 0$ & $72 \pm 52$ & $82 \pm 37$ \\ \hline
     & Success rate {[}\%{]} & \cellcolor[HTML]{FFFFFF}{\color[HTML]{3531FF} \textbf{100}} & \cellcolor[HTML]{FFFFFF}100 & \cellcolor[HTML]{FFFFFF}100 & \cellcolor[HTML]{FFFFFF}98 & \cellcolor[HTML]{FFFFFF}82 & \cellcolor[HTML]{FFFFFF}50 & \cellcolor[HTML]{FFFFFF}72 & \cellcolor[HTML]{FFFFFF}96 \\ \cline{2-10} 
     & Avg. Path Cost {[}steps{]} & \cellcolor[HTML]{FFFFFF}{\color[HTML]{3531FF} \textbf{35.4}} & \cellcolor[HTML]{FFFFFF}48.1 & \cellcolor[HTML]{FFFFFF}88.2 & \cellcolor[HTML]{FFFFFF}66.1 & \cellcolor[HTML]{FFFFFF}76.0 & \cellcolor[HTML]{FFFFFF}28.8 & \cellcolor[HTML]{FFFFFF}75.5 & \cellcolor[HTML]{FFFFFF}80.3 \\ \cline{2-10} 
    \multirow{-3}{*}{\begin{tabular}[c]{@{}c@{}}Using random \\ potential state \\ $s \in \Phi$ as start state\end{tabular}} & Planning time {[}msec{]} & \cellcolor[HTML]{FFFFFF}$2089 \pm 624$ & \cellcolor[HTML]{FFFFFF}$459 \pm 16$ & \cellcolor[HTML]{FFFFFF}$37 \pm 3$ & \cellcolor[HTML]{FFFFFF}$92 \pm 1$ & \cellcolor[HTML]{FFFFFF}$1857\pm 720$ & \cellcolor[HTML]{FFFFFF}$1823\pm 713$ & \cellcolor[HTML]{FFFFFF}$247 \pm 12$ & \cellcolor[HTML]{FFFFFF}$116 \pm 1$ \\ \hline
    \end{tabular}}
\end{table*}

\subsubsection{State validity check} 
Considering that neighborhoods are constructed within $\Gamma$, the validation process for task space states within $\mathcal{G}$ involves verifying the existence of an inverse kinematics solution and then assessing graspability and placeability (i.e. evaluating the feasibility of grasping and placing operations from the given state). This is accomplished by generating grasp and pose action trajectories through Runge-Kutta integration.

\subsubsection{Anytime refinement planning from any potential state to a goal state}
As assembly is a sequential process, the planner continuously receives consecutive queries, where any potential state $s\in \Phi$ can serve as the initial state for the planning request. From any potential state, there exists a path to $s_{home}$. In assembly, the start state can be a previous goal state, a state along a previously queried path, or $s_{home}$. This enables us to simplify Alg. \ref{alg:query} even more, since lines 7-10 are now trivially computed; $\Pi_{home, s}$ is equal to or a subset of the previous queried path. Still, the concatenated path may result in a highly suboptimal solutions without the refinement approach, as depicted in Figure \ref{fig:anytime_refinment_action}.

\subsubsection{Anyobject Extension}

In manipulation, particularly during assembly tasks, there is a constant interplay between grasping and placing objects which alters the arm's kinematic chain and therefore its collision model. To ensure collision-free solutions, our algorithm is extended to accommodate any-object manipulation. Having a set of object primitives and their maximum allowable sizes (sphere, box, cylinder, etc.), in Algorithm \ref{alg:preprocess}, line 7, a set of paths is computed. Initially, we calculate a path without any object attached. Subsequently, for each object primitive, we determine the largest collision-free dimensions that can be accommodated by that path. The trajectory is recomputed with the colliding dimensions and this process is repeated until the object reaches the maximum allowable size. We repeat these steps for each object, culminating in a set of object-path pairs which we can query online given the object primitive and size.

\subsection{Evaluation}
In this experimental setting, we preprocessed 4 local-RoIs, allowing primitives in the $x$, $y$, and $z$ dimensions, along with a single rotational degree of freedom (roll, pitch, or yaw). The preprocessing stage (Alg. \ref{alg:preprocess}) took 126 minutes and used 88 MB of memory for the entire RoI.
We compared our method with seven algorithms: CTMP without anytime refinement, CTMP with shortcutting, PRM, RRT*, ARA*, E-Graphs and RRT-Connect, presented in Table \ref{tab:results}. All experimental results were run on Intel Core i9-12900H with 64GB RAM (4.7GHz). The top three rows provide statistics for 100 queries using the home state ($s_{home}$) as the initial state. A timeout of 500 milliseconds was set for all planners. The bottom 3 rows provide statistics for 50 queries between random potential states ($s_{start}\in\Phi$) from a pick region as initial states to random goal states in place region. Here, we randomly assigned extra planning time from 500 msec to 3 sec, mirroring real scenarios where planners query during ongoing execution. The ARA* search was initiated with a weight of 50, which was then gradually decreased by 5. For the E-graphs approach, we utilized 8 experiences from the CTMP preprocess phase, selecting 2 experiences from each local-RoI. Each of these experiences represented a path leading to a central state within a randomly chosen neighborhood. In both experiments, while maintaining a 100\% success rate, our planner also notably enhanced the solution quality. The results demonstrate that our planner not only ensures provably constant-time completeness but also generates quality solutions when the allocated time budget permits.

\section{Conclusions}
We have introduced an anytime adaptation of CTMP, demonstrating its capability to converge towards optimal solutions. Its applicability in manipulation tasks was showcased in the context of assembly's pick-and-place operations. Future research could leverage the framework of CTMP and utilize its solutions as experiences for making real-time adjustments to minor environment changes. 


\bibliographystyle{IEEEtran}
\bibliography{main}

\end{document}